\newtheorem{problem}{Problem}
\newtheorem{theorem}{Theorem}
\newtheorem{lemma}{Lemma}
\newtheorem{remark}{Remark}
\newtheorem{example}{Example}
\def\BibTeX{{\rm B\kern-.05em{\sc n\kern-.025em b}\kern-.08em
   T\kern-.1667em\lower.7ex\hbox{E}\kern-.125emX}}
\title{\huge Semantic-aware Transmission Scheduling: a Monotonicity-driven Deep Reinforcement Learning Approach}
\author{\large Jiazheng Chen,{\IEEEmembership{Graduate Student Member,~IEEE,}}, Wanchun Liu*,~\IEEEmembership{Member,~IEEE,} Daniel~E.~Quevedo,~\IEEEmembership{Fellow,~IEEE,} Yonghui~Li,~\IEEEmembership{Fellow,~IEEE,} Branka Vucetic,~\IEEEmembership{Life Fellow,~IEEE}
\thanks{J. Chen, W. Liu, Y. Li, and B. Vucetic are with the School of Electrical and Information Engineering, The University of Sydney, Sydney, NSW 2006, Australia (e-mails: \{jiazheng.chen, wanchun.liu, yonghui.li, branka.vucetic\}@sydney.edu.au).
D. E. Quevedo is with the School of Electrical Engineering and Robotics, Queensland University of Technology (QUT), Brisbane, Australia. (e-mail: dquevedo@ieee.org).
\textit{(W. Liu is the corresponding author.)}}
\vspace{-1.1cm}
} 
\begin{document}
\maketitle

\vspace{-0.6cm}
\begin{abstract}
For cyber-physical systems in the 6G era, semantic \\ communications connecting distributed devices for dynamic control and remote state estimation are required to guarantee application-level performance, not merely focus on communication-centric performance. Semantics here is a measure of the usefulness of information transmissions. Semantic-aware transmission scheduling of a large system often involves a large decision-making space, and the optimal policy cannot be obtained by existing algorithms effectively. In this paper, we first establish the monotonicity of the Q function of the optimal semantic-aware scheduling policy and then develop advanced deep reinforcement learning (DRL) algorithms by leveraging the theoretical guidelines. Our numerical results show a 30\% performance improvement compared to benchmark algorithms.
\end{abstract}

\begin{IEEEkeywords}
Deep reinforcement learning, sensor scheduling, age of information, semantic communications.
\end{IEEEkeywords}
\vspace{-0.5cm}

\section{Introduction} \label{sec:intro}
Different from the first generation to the 5G communications, which focused on improving data-oriented
performance (e.g., data rate, latency and reliability), the 6G is envisioned to support many emerging applications and adopt semantic-related metrics for application-level performance optimization~\cite{gunduz2022beyond}.
The potential applications supported by 6G semantic communications can be divided into two categories: human-centered systems (e.g., Metaverse and XR~\cite{chen2023aoii}) and cyber-physical systems (CPSs) (e.g., autonomous driving and networked robotics~\cite{kountouris2021semantics}). For the former, semantics is a measure of the meaning of information that a human can understand; for the latter, semantics characterizes the usefulness of messages with respect to the goal of data exchange.

In CPSs, semantic-aware transmission aims to minimize the distortion and timing mismatch of the data between the edge devices and the remote destination~\cite{kountouris2021semantics}.
Thus, the freshness of messages is critical, and monotonic functions of age of information (AoI) are often introduced as semantic quality metrics~\cite{wu2018optimalmulti,liu2021remoteMF}.
Semantic-aware (and AoI-aware) dynamic scheduling of cyber-physical systems over limited communication resources has attracted much attention in the communications society~\cite{gunduz2022beyond}.
In~\cite{wu2018optimalmulti}, an optimal scheduling problem of a multi-sensor remote estimation system was considered to minimize the overall long-term estimation mean-square error (MSE). Importantly, the instantaneous MSE of each sensor is a function of the corresponding AoI. The scheduling problem was formulated as a Markov decision process (MDP) that can be solved by conventional value and policy iteration methods.
In~\cite{tang2020minimizing}, an optimal AoI minimization scheduling problem was considered under an average transmission power constraint and was formulated into a constrained MDP.
Note that the conventional MDP algorithms can only solve very small-scale problems due to the high computational complexity introduced by the curse of dimensionality. 
Heuristic scheduling policies without optimality guarantees were proposed in were proposed in~\cite{qian2020minimizing,tong2022age}.

The recent development of deep reinforcement learning (DRL) allows us to solve large MDPs effectively with deep neural networks (NNs) for function approximations.
In~\cite{Yang2022OMA}, deep Q-network (DQN) was adopted to solve optimal scheduling problems of multi-sensor-multi-channel remote estimation systems. In particular, those solutions can achieve better performance (in terms of MSE) than heuristic methods.
Advanced DRL algorithms with an actor-critic framework (consisting of an actor NN and a critic NN), such as soft actor-critic (SAC) and deep deterministic policy gradient (DDPG), were introduced to deal with large-scale semantic-aware scheduling problems that cannot be handled by DQN~\cite{chen2022seDRL,akbari2021age}.
\emph{A major drawback of applying existing DRL methods in large-scale scheduling problems is that training is usually very time-consuming and can often get stuck in local minima.
This motivates us to have a more profound  understanding of the structural properties of optimal scheduling policies (such as the monotonicity of the critic NN), which can then be leveraged to guide DRL algorithms towards faster and more effective convergence.}

In this paper, we focus on \textbf{developing advanced actor-critic DRL algorithms to optimize semantic-aware scheduling policies}. The contributions include: First, we mathematically prove that a well-trained critic NN of an optimal scheduling policy should be a \textbf{monotonic function} in terms of the AoI states and channel states of the system. Such a property has never been established in the open literature.
Second, by leveraging this theoretical property, we propose two monotonic critic NN training methods: \textbf{network architecture-enabled monotonicity} and \textbf{regularization-based monotonicity}. The proposed training methods can be directly applied to the baseline DDPG and its variants.
Last, our numerical results show that the proposed \textbf{monotonicity-enforced} DDPG algorithms can save training time and improve training performance significantly compared to the baseline DDPG.

\vspace{-0.3cm}
\section{System Model} \label{sec: sys}
We consider semantic communication in a CPS with $N$ edge devices (e.g., sensors and robots) and a remote destination (e.g., a remote estimator or controller) connected by a wireless network with $M<N$ channels (e.g., sub-carriers).

\subsection{Wireless Communication Model}
At each time step $t$, the transmission scheduling action for device $n$ is denoted by\vspace{-0.1cm}
\begin{equation}\label{eq:action}
    a_{n,t} = \left\{
    \begin{array}{ll}
        0  & \text{if device $n$ is not scheduled,} \\
        m  & \text{if device $n$ is scheduled to channel $m$.}
    \end{array}
    \right.\vspace{-0.1cm}
\end{equation}
We assume that each device can be scheduled to at most one channel and that each channel is assigned to one device, i.e.,\vspace{-0.1cm}
\begin{equation}\label{eq: action constraint}
    \sum_{m=1}^{M} \boldsymbol{\mathbbm{1}}\left( a_{n,t} = m \right) \leq 1, \quad 
    \sum_{n=1}^{N} \boldsymbol{\mathbbm{1}}\left( a_{n,t} = m \right) = 1,\vspace{-0.1cm}
\end{equation} where $\boldsymbol{\mathbbm{1}} (\cdot)$ is the indicator function.

We consider independent and identically distributed (i.i.d.) block fading channels, where the channel quality is fixed during each packet transmission and varies packet by packet independently.
The channel state of the system at time $t$ is denoted by a $N \times M$ matrix $\mathbf{H}_{t}$, where the $n$th row and $m$th column element  $h_{n,m,t} \in \mathcal{H} \triangleq \left\{ 1, 2, \dots, \hat{h} \right\}$ represents the channel state between device $n$ and the destination at channel~$m$ quantized into $\hat{h}$ levels.
The distribution of the channel state $h_{n,m,t}$ is given~as\vspace{-0.2cm}
\begin{equation}\label{eq:q}
\operatorname{Pr}(h_{n,m,t}=j) = q^{(n,m)}_{j}, \forall t,\vspace{-0.1cm}
\end{equation}
where $\sum_{j=1}^{\hat{h}} q^{(n,m)}_{j} =1, \forall n,m$.
A higher channel state leads to a large packet drop probability.
Let $p_{n,m,t}$ denote the packet drop probability at the channel state $h_{n,m,t}$.
The instantaneous full channel state $\mathbf{H}_{t}$ is available at the remote estimator based on standard channel estimation methods.
\vspace{-0.2cm}

\subsection{Semantic Communication Performance}
First, we define $\tau_{n,t}\in \{1,2,\dots\}$ as the AoI of device $n$ at time $t$, i.e., the time elapsed since its latest device packet was successfully received~\cite{liu2021remoteMF, tang2020minimizing, qian2020minimizing}:\vspace{-0.1cm}
\begin{equation}\label{eq:tau}
\!\!\!\!\!\tau_{n,t+1} \!=\!\!\begin{cases}
\!1, & {\!\!\!\!\!\text{at time $t$, device $n$'s packet is received,}}\\
\!\tau_{n,t}\!+\!1, & {\!\!\!\!\text{otherwise.}}
\end{cases}\!\!\!\!\!\vspace{-0.1cm}
\end{equation}

Next, we define a \emph{positive non-decreasing cost} (or penalty) function $g_n(\tau_{n,t}), \forall n \in \{1,\dots,N\}$ as a semantic measurement of device~$n$'s information. A higher AoI state corresponds to an increased cost function, indicating poorer performance in semantic communications. Semantic communication systems with varying functionalities possess distinct \(g_n(\cdot)\) functions. The scheduling algorithms proposed in the rest of this paper are general and applicable to any positive and non-decreasing cost function.

An example below shows how to determine $g_n(\cdot)$ in a specific semantic communication system.

\begin{example}[A remote state estimation system]\label{eg:1}
Such a system consists of $N$ sensors, each measuring a physical process, and a remote estimator for signal reconstruction.
Each physical process dynamics and its measurement are modeled as a discrete-time linear time-invariant (LTI) system:\vspace{-0.1cm}
\begin{equation}\label{eq:LTI}
\begin{aligned}
    \mathbf{e}_{n,t+1}  = \mathbf{A}_{n} \mathbf{e}_{n, t} + \mathbf{w}_{n, t},  \ 
    \mathbf{z}_{n, t}  = \mathbf{C}_{n} \mathbf{z}_{n, t} + \mathbf{v}_{n, t},\vspace{-0.1cm}
\end{aligned}
\end{equation}
where $\mathbf{e}_{n, t} \in \mathbb{R}^{l_{n}}$ is  process $n$'s state at time $t$, and  $\mathbf{z}_{n, t} \in \mathbb{R}^{c_{n}}$ is the state measurement of sensor $n$; 
$\mathbf{A}_{n} \in \mathbb{R}^{l_{n} \times l_{n}}$ and $\mathbf{C}_{n} \in \mathbb{R}^{c_{n} \times l_{n}}$ are the system matrix
and the measurement matrix, respectively; $\mathbf{w}_{n, t} \in \mathbb{R}^{l_{n}}$ and $\mathbf{v}_{n, t} \in \mathbb{R}^{c_{n}}$ are the process disturbance and the measurement noise modeled as i.i.d zero-mean Gaussian random vectors $\mathcal{N}(\mathbf{0},\mathbf{W}_{n})$ and $\mathcal{N}(\mathbf{0},\mathbf{V}_{n})$, respectively. 
Due to the noisy measurement in~\eqref{eq:LTI}, sensor $n$ applies a local Kalman filter (KF) to pre-process the raw measurement before sending it to the remote estimator.
Due to the dynamic scheduling and the random packet dropouts, the remote estimator cannot always receive sensor~$n$'s measurement and thus uses a minimum mean square error estimator for state estimation, say $\hat{\mathbf{e}}_{n,t}$. Then, the estimation error covariance is given as \vspace{-0.1cm}
\begin{align}
  \mathbf{P}_{n, t} & \triangleq \mathbb{E} \left[ \left(\hat{\mathbf{e}}_{n, t} - \mathbf{e}_{n, t}\right) \left(\hat{\mathbf{e}}_{n, t} - \mathbf{e}_{n, t} \right)^{\top} \right] 
  = f_{n}^{\tau _{n,t}}(\bar{\mathbf{P}}_{n}),\vspace{-0.1cm}
\end{align}
where $f_{n}(\mathbf{X}) = \mathbf{A}_{n} \mathbf{X} \mathbf{A}_{n}^{\top} + \mathbf{W}_{n}$ and $f^{\tau+1}_{n}(\cdot) =f_n(f^{\tau}_{n}(\cdot))$, where $\bar{\mathbf{P}}_{n}$ is a constant matrix determined by $\mathbf{A}_n$, $\mathbf{C}_n$, $\mathbf{W}_n$ and $\mathbf{V}_n$.

{To characterize the remote estimation performance, we define the cost function as the estimation MSE of process $n$  \vspace{-0.1cm}
\begin{equation}
    g_n(\tau_{n,t}) \triangleq \operatorname{Tr}(\mathbf{P}_{n,t}) = f_{n}^{\tau _{n,t}}(\bar{\mathbf{P}}_{n}), \vspace{-0.1cm}
\end{equation}
which has been proved to be a positive non-decreasing function of $\tau_{n,t}$, see~\cite{liu2021remoteMF} for further details. 
}

\end{example}
\vspace{-0.3cm}
\section{Semantic-aware Transmission Scheduling} \label{sec: problem}
We aim to find a dynamic scheduling policy $\pi(\cdot)$ that minimizes the expected total discounted cost function of all $N$ devices over the infinite time horizon.
\begin{problem}\label{pro1}\vspace{-0.3cm}
\begin{equation}
    \min_{\pi} \lim_{T \to \infty} \mathbb{E}^\pi \left[ \sum_{t=1}^{T} \sum_{n=1}^N \gamma^t g_n(\tau_{n,t}) \right],\vspace{-0.1cm}
\end{equation}
where $\gamma\in(0,1)$ is a discount factor.
\end{problem}
\vspace{-0.2cm}

\subsection{Markov Decision Process Formulation}
Problem~\ref{pro1} is a sequential decision-making problem with the Markovian property and hence can be cast as an MDP as below. 

1) The state of the MDP consists of both the AoI and channel states as $\mathbf{s}_{t} \triangleq \left( \bm{\tau}_{t}, \mathbf{H}_{t} \right) \in \mathbb{N}^N \times \mathcal{H}^{M \times N}$, where $\bm{\tau}_{t} = (\tau_{1,t}, \tau_{2,t}, \dots, \tau_{N,t}) \in \mathbb{N}^N$ is the AoI state vector.
    
2) The overall schedule action of the $N$ devices is defined as $\mathbf{a}_t = (a_{1,t}, a_{2,t}, \dots, a_{N,t}) \in \left\{ 0, 1, 2, \dots, M \right\}^N$ under the constraint~\eqref{eq: action constraint}.
The policy $\pi(\cdot)$ maps a state to an action as $\mathbf{a}_{t} = \pi(\mathbf{s}_{t}) $.

3) The immediate reward (i.e., the negative sum cost of all edge devices) at time $t$ is defined as $-\sum_{n=1}^N g_n(\tau_{n,t})$.

4) The transition probability $\operatorname{Pr}(\mathbf{s}_{t+1}|\mathbf{s}_t, \mathbf{a}_t)$ is the probability of the next state $\mathbf{s}_{t+1}$ given the current state $\mathbf{s}_{t}$ and the  action $\mathbf{a}_t$. 
Since the optimal policy of an infinite-horizon MDP is stationary~\cite[Chapter~6]{puterman2014markov}, the state transition is independent of the time index. Thus, we drop the subscript $t$ and use $\mathbf{s}\triangleq (\bm\tau,\mathbf{H})$ and $\mathbf{s}^+ \triangleq (\bm\tau^+,\mathbf{H}^+)$ to represent the current and the next states, respectively.
Due to the i.i.d. fading channels and the AoI state transition, we have \vspace{-0.2cm}
\begin{equation}\label{eq: transition p}
   \operatorname{Pr}(\mathbf{s}^+|\mathbf{s}, \mathbf{a}) = \operatorname{Pr}(\bm{\tau}^+|\bm{\tau},\mathbf{H}, \mathbf{a}) \operatorname{Pr}(\mathbf{H}^+), 
\end{equation}
where $\operatorname{Pr}(\bm{\tau}^+|\bm{\tau},\mathbf{H}, \mathbf{a}) = \prod_{n=1}^N \operatorname{Pr}(\tau^+_{n}|\tau_{n}, \mathbf{H}, a_{n})$ and\vspace{-0.1cm}
\begin{align}\label{eq: transition p 2}
    \!\!\!\!\!\operatorname{Pr}(\tau^+_{n}|\tau_{n}, \mathbf{H}, a_{n})  \!=\! 
    \begin{cases}
        1 - p_{n, m}, & \!\!\!\text{if $\tau^+_{n} = 1, a_{n} = m$,} \\
        p_{n, m}, & \!\!\!\text{if $\tau^+_{n} = \tau_{n} \!+\! 1, a_{n} \!= \!m$,}\\
        1,          & \!\!\!\text{if $\tau^+_{n} = \tau_{n} \!+\! 1, a_{n} \!=\! 0$,} \\
        0, & \!\!\!{\text{otherwise.}}
    \end{cases}\vspace{-0.3cm}\!\!\!\!
\end{align}

\subsection{Value Functions of the Optimal Policy}
We define the V function (also called  ``state-value function"), $V(\mathbf{s}_{t}): \mathcal{S} \to \mathbb{R}$ and the Q function (the ``action-value function"), $Q(\mathbf{s}_{t}, \mathbf{a}_{t}): \mathcal{S} \times \mathcal{A} \to \mathbb{R}$ of the optimal policy $\pi^*(\cdot)$.
Given the current state $\mathbf{s}_t$, the V function is the maximum expected discounted sum of the future reward achieved by the optimal policy, i.e.,
$    V(\mathbf{s}_{t}) = \mathbb{E}\left[ \sum_{t'=t}^{\infty} \gamma^{t'-t} r\left(\mathbf{s}_{t'}\right) \Big\vert \mathbf{a}_{t'} = \pi^{*}(\mathbf{s}_{t'}),\forall t'\geq t\right].
$
Given the current state-action pair, $(\mathbf{s}_t,\mathbf{a}_{t})$, the Q function is the maximum expected discounted sum of the future reward under the optimal policy:
\begin{equation}\label{eq: def Q}
    \!Q(\mathbf{s}_{t}, \mathbf{a}_{t}) \!=\!  \mathbb{E} \left[\sum_{t'=t}^{\infty} \gamma^{t'-t} r(\mathbf{s}_{t'}) \Big\vert \mathbf{a}_{t}, \mathbf{a}_{t'} \!=\! \pi^{*}(\mathbf{s}_{t'}),\forall t'\!>\!t \right].
\end{equation}
Thus, the V function measures the long-term performance given the current state, while the Q function can be used for comparing the long-term performance affected by different current actions.

The relation between the two value functions is given by~\cite{sutton2018reinforcement}\vspace{-0.1cm}
\begin{equation}\label{eq:Bellman_Q}
    Q(\mathbf{s}_{t}, \mathbf{a}_{t}) 
    = r(\mathbf{s}_{t}) + \gamma \sum_{\mathbf{s}_{t+1}}\operatorname{Pr}(\mathbf{s}_{t+1}|\mathbf{s}_{t}, \mathbf{a}_{t}) V(\mathbf{s}_{t+1}). \vspace{-0.1cm}
\end{equation}
In particular, the Q function satisfies the Bellman equation:\vspace{-0.1cm}
\begin{equation}\label{eq:optimal Q}
     Q(\mathbf{s}_t, \mathbf{a}_t) = r(\mathbf{s}_t) + \mathbb{E}_{\mathbf{s}_{t+1}} \left[ \gamma \max_{\mathbf{a}_{t+1}}Q \left(\mathbf{s}_{t+1}, \mathbf{a}_{t+1}\right) \right].\vspace{-0.1cm}
\end{equation}

\subsection{DDPG for scheduling policy optimization}
A DDPG agent has two neural networks (NNs): an actor NN with the parameter set $\bm{\alpha}$ and a critic network with the parameter set $\bm{\beta}$. In particular, the actor NN aims to approximate the optimal policy $\pi^{*}(\mathbf{s})$ by $\pi(\mathbf{s}; \bm{\alpha})$ after well-training, while the critic NN approximates $Q(\mathbf{s}_t, \mathbf{a}_t)$ by $Q(\mathbf{s}_t, \mathbf{a}_t; \bm{\beta})$. 
The overall training process for the two NNs works iteratively based on sampled data named \emph{transitions}. Each transition $\mathcal{T}_{i} \triangleq (\mathbf{s}_i, \mathbf{a}_{i}, r_i, \mathbf{s}_{i+1})$,  consists of the state, action, reward, and next state.
To train the actor NN, the critic NN's Q values are utilized for characterizing the performance of the actor NN's policy over the sampled data; to train the critic NN, the transitions generated by the actor NN's policy are utilized to evaluate the accuracy of Q function approximation.
In particular, 
the approximation error given $\mathcal{T}_i$ is defined as the temporal difference (TD) error\vspace{-0.1cm}
\begin{equation}\label{eq:TD}
    \mathsf{TD}_{i}^{2} \triangleq \left( y_{i}-Q(\mathbf{s}_{i},\mathbf{a}_{i};\bm{\beta}) \right)^{2},\vspace{-0.1cm}
\end{equation}
where $y_{i} = r_{i} + \gamma \max_{{\mathbf{a}}_{i+1}} Q(\mathbf{s}_{i+1},{\mathbf{a}}_{i+1};\bm{\beta}^{-})$ is the estimation of the Q function at next step and $\bm{\beta}^{-}$ is the critic NN parameters from the previous iteration.
In other words, the training of the critic NN is to minimize the TD errors over the transitions, making the approximated Q function satisfy the Bellman equation~\eqref{eq:optimal Q}~\cite{sutton2018reinforcement}. 

\begin{remark}
    Our work focuses on the effective training of the critic NN by utilizing the partial monotonicity of the Q function discussed in the section below, and the training method for the actor NN is identical to the baseline DDPG. Furthermore, the proposed training methods can be applied to solve other DRL problems that involve monotonic critic NNs and can seamlessly integrate with actor-critic DRL algorithms like Twin Delayed DDPG (TD3) and Proximal Policy Optimization (PPO).
\end{remark}

\vspace{-0.3cm}
\section{Partial Monotonicity of Q Function} \label{sec: proof of monotonicity}

Before proceeding further, we need the following technical lemma related to the V function monotonicity.
\begin{lemma}[Monotonicity of V function w.r.t. AoI states~\cite{chen2022seDRL}]\label{lemma:monotone V}
    Consider states $\mathbf{s} = (\bm{\tau}, \mathbf{H})$ and $\mathbf{s}'_{\text{AoI}} = (\bm{\tau}'_{(n)}, \mathbf{H})$, where $\bm{\tau}'_{(n)} = (\tau_{1}, \dots, \tau'_{n}, \dots, \tau_{N})$ and $\tau'_{n} \geq \tau_{n}$. For any $n\in\{1,\dots,N\}$, the following holds \vspace{-0.1cm}
    \begin{equation}
        V(\mathbf{s}'_{\text{AoI}}) \leq V(\mathbf{s}).\vspace{-0.1cm}
    \end{equation}
\end{lemma}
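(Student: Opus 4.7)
The plan is to prove the lemma by induction on the value iteration operator, exploiting the monotonicity of the per-stage cost. First I would introduce the Bellman operator
$(\mathcal{T} U)(\mathbf{s}) = r(\mathbf{s}) + \gamma \max_{\mathbf{a}} \sum_{\mathbf{s}^+} \operatorname{Pr}(\mathbf{s}^+|\mathbf{s},\mathbf{a})\, U(\mathbf{s}^+)$
acting on bounded functions $U:\mathcal{S}\to\mathbb{R}$, where $r(\mathbf{s}) = -\sum_j g_j(\tau_j)$. By standard MDP theory, with the discount factor $\gamma\in(0,1)$ the iterates $V_k = \mathcal{T}^k V_0$ starting from $V_0\equiv 0$ converge pointwise to the optimal value function $V$. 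It therefore suffices to show that the property ``$V_k(\mathbf{s}'_{\text{AoI}}) \leq V_k(\mathbf{s})$ whenever $\tau'_n \geq \tau_n$ and all other coordinates of $\mathbf{s}$ and $\mathbf{s}'_{\text{AoI}}$ coincide'' is preserved by $\mathcal{T}$.

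For the inductive step, let $\mathbf{a}^\star$ be a maximiser in $V_{k+1}(\mathbf{s}'_{\text{AoI}})$. Since $V_{k+1}(\mathbf{s})$ is defined as a maximum over all feasible actions, applying $\mathbf{a}^\star$ at $\mathbf{s}$ yields the lower bound $V_{k+1}(\mathbf{s}) \geq r(\mathbf{s}) + \gamma\, \mathbb{E}[V_k(\mathbf{s}^+) \mid \mathbf{s}, \mathbf{a}^\star]$. Two ingredients then close the argument. First, because each $g_n$ is non-decreasing and only $\tau_n$ differs between the two states, $r(\mathbf{s}) \geq r(\mathbf{s}'_{\text{AoI}})$. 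Second, I would couple the two next-state distributions under $\mathbf{a}^\star$: the channel matrix $\mathbf{H}^+$ is i.i.d.\ and independent of the action, so its marginal is identical in both systems; for every device $k\neq n$ the AoI updates are pathwise identical since $\tau_k$ coincides; and for device $n$ a single shared Bernoulli outcome (``packet received'' vs.\ ``dropped'') derived from \eqref{eq: transition p 2} forces $\tau_n^+$ and $\tau_n'^+$ either to both reset to $1$ or to both increment by $1$, so $\tau_n^+ \leq \tau_n'^+$ almost surely. The induction hypothesis, applied coordinate-wise along this coupling, then gives $\mathbb{E}[V_k(\mathbf{s}^+) \mid \mathbf{s},\mathbf{a}^\star] \geq \mathbb{E}[V_k(\mathbf{s}'^+_{\text{AoI}}) \mid \mathbf{s}'_{\text{AoI}},\mathbf{a}^\star]$, and combining the two facts yields $V_{k+1}(\mathbf{s}) \geq V_{k+1}(\mathbf{s}'_{\text{AoI}})$. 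Passing to the limit $k\to\infty$ via the $\gamma$-contraction of $\mathcal{T}$ under the sup-norm delivers the claim.

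The main obstacle is making the one-step coupling airtight under the action-feasibility constraint \eqref{eq: action constraint}: one must simultaneously handle all other devices (whose $\tau_k$ are equal but whose next AoIs still depend on the common action $\mathbf{a}^\star$), respect the coupling between channel realisations and drop probabilities $p_{n,m}$, and ensure for device $n$ that the shared randomness forces the two updates to be comparable rather than merely equal in distribution. A subtler point is that $\mathbf{a}^\star$ is defined relative to $\mathbf{s}'_{\text{AoI}}$ and need not be optimal at $\mathbf{s}$, but this is exactly why the ``$\geq$'' direction of the Bellman inequality, rather than equality, is used. Once the coupling is in place, the remaining manipulations of expectations are routine.
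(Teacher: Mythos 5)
The paper does not actually prove this lemma: it is imported verbatim from the cited reference \cite{chen2022seDRL} and used as a black box in the proofs of Theorems~\ref{theo: AoI} and~\ref{theo: channel}. Your proposal therefore supplies an argument where the paper supplies none, and the argument you give --- value iteration from $V_0\equiv 0$, preservation of the monotonicity property under the Bellman operator, and a pathwise coupling of the two next-state distributions under the action $\mathbf{a}^\star$ that is optimal at $\mathbf{s}'_{\text{AoI}}$ --- is the standard and correct route for this kind of structural result, and is consistent with how the paper itself manipulates the transition kernel in the proof of Theorem~\ref{theo: AoI} (factoring $\operatorname{Pr}(\bm{\tau}^+|\bm{\tau},\mathbf{H},\mathbf{a})$ into the device-$n$ term and the rest, and exploiting that only the device-$n$ coordinate differs). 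The coupling itself is sound: since $\mathbf{s}$ and $\mathbf{s}'_{\text{AoI}}$ share the same $\mathbf{H}$, the drop probability $p_{n,m}$ relevant to device $n$ is identical in both systems, a single shared Bernoulli forces $\tau_n^+\leq\tau_n'^+$ in every realisation, and all other devices' updates can be made pathwise equal because their AoIs, channels, and assigned actions coincide. The ``main obstacle'' you flag --- feasibility of $\mathbf{a}^\star$ at $\mathbf{s}$ --- is in fact a non-issue here, because the constraint \eqref{eq: action constraint} does not depend on the state, so the feasible action set is the same at both states and the one-sided Bellman inequality you invoke is immediate.

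One genuine technical caveat you should acknowledge: your appeal to ``standard MDP theory'' for the sup-norm contraction and the convergence $V_k\to V$ presumes bounded per-stage costs, whereas in Example~\ref{eg:1} the cost $g_n(\tau)=\operatorname{Tr}(f_n^{\tau}(\bar{\mathbf{P}}_n))$ grows without bound in $\tau$ (geometrically when $\rho(\mathbf{A}_n)>1$). The fix is routine --- work in a weighted sup-norm, or impose $\gamma\rho(\mathbf{A}_n)^2<1$ so that the discounted sums are finite and value iteration still converges --- but it should be stated, since otherwise $V$ itself may fail to be well defined. The paper glosses over the same point, so this is a gap in the problem setup rather than in your argument specifically; with that boundedness condition added, your proof is complete.
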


Now we establish the partial monotonicity of the Q function in terms of the AoI and channel states as below.
\begin{theorem}[Monotonicity of Q function w.r.t. AoI states]\label{theo: AoI}
    Consider action $\mathbf{a} = (a_1, \dots, a_N)$, and states $\mathbf{s} = (\bm{\tau}, \mathbf{H})$ and $\mathbf{s}'_{\text{AoI}} = (\bm{\tau}'_{(n)}, \mathbf{H})$, where $\bm{\tau}'_{(n)} = (\tau_{1}, \, \dots, \, \tau'_{n}, \, \dots, \tau_{N})$, $\tau'_{n} \geq \tau_{n}$. The following holds\vspace{-0.1cm}
    \begin{align}
        \label{ineq: Q aoi}
        Q(\mathbf{s}'_{\text{AoI}}, \mathbf{a}) & \leq Q(\mathbf{s}, \mathbf{a}).\vspace{-0.2cm}
    \end{align}
\end{theorem}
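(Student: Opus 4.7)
The plan is to derive inequality \eqref{ineq: Q aoi} directly from the Bellman-style decomposition \eqref{eq:Bellman_Q}, which writes
\[
Q(\mathbf{s},\mathbf{a}) = r(\mathbf{s}) + \gamma \sum_{\mathbf{s}^+}\operatorname{Pr}(\mathbf{s}^+\mid\mathbf{s},\mathbf{a})\,V(\mathbf{s}^+),
\]
and the analogous expression for $Q(\mathbf{s}'_{\text{AoI}},\mathbf{a})$. The strategy is to compare the two sides term by term: (i) show the immediate reward is no larger at $\mathbf{s}'_{\text{AoI}}$; and (ii) show that the expected next-stage value is no larger at $\mathbf{s}'_{\text{AoI}}$ by invoking Lemma~\ref{lemma:monotone V}.

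For step (i), since $r(\mathbf{s}) = -\sum_{k=1}^{N} g_k(\tau_k)$ and only the $n$-th coordinate differs, with $\tau'_n \geq \tau_n$, the non-decreasing property of $g_n$ gives $g_n(\tau'_n) \geq g_n(\tau_n)$, hence $r(\mathbf{s}'_{\text{AoI}}) \leq r(\mathbf{s})$.

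For step (ii), I would use the transition-kernel factorization \eqref{eq: transition p}--\eqref{eq: transition p 2}. The channel component $\operatorname{Pr}(\mathbf{H}^+)$ is i.i.d.\ and therefore identical under both starting states. For the AoI component, the marginals for devices $k\neq n$ coincide under the two starting states, and the marginal for device $n$ differs only as follows: if $a_n=0$, then $\tau_n^+ = \tau_n+1$ (respectively $\tau'_n+1$) deterministically; if $a_n=m\geq 1$, then $\tau_n^+ = 1$ with probability $1-p_{n,m}$ in both cases, while the complementary mass $p_{n,m}$ is placed on $\tau_n+1$ versus $\tau'_n+1$. Thus I can build an explicit coupling between the next-state distributions starting from $\mathbf{s}$ and from $\mathbf{s}'_{\text{AoI}}$ such that, with probability one, the coupled next state $(\mathbf{s}^+)'_{\text{AoI}}$ has $n$-th AoI coordinate no smaller than that of $\mathbf{s}^+$, with every other coordinate (including the whole channel matrix) identical. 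Pointwise, Lemma~\ref{lemma:monotone V} yields $V((\mathbf{s}^+)'_{\text{AoI}}) \leq V(\mathbf{s}^+)$, so taking expectations preserves the inequality:
\[
\sum_{\mathbf{s}^+}\operatorname{Pr}(\mathbf{s}^+\mid\mathbf{s}'_{\text{AoI}},\mathbf{a})\,V(\mathbf{s}^+) \;\leq\; \sum_{\mathbf{s}^+}\operatorname{Pr}(\mathbf{s}^+\mid\mathbf{s},\mathbf{a})\,V(\mathbf{s}^+).
\]
Combining (i) and (ii) via \eqref{eq:Bellman_Q} gives $Q(\mathbf{s}'_{\text{AoI}},\mathbf{a}) \leq Q(\mathbf{s},\mathbf{a})$, as desired.

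The only delicate point is the coupling argument in step (ii); a looser but equivalent phrasing is to notice that the two next-AoI distributions for device $n$ have matching support pairs $\{1,\tau_n+1\}$ and $\{1,\tau'_n+1\}$ with the same probability weights, so the sum $\sum_{\mathbf{s}^+}\operatorname{Pr}(\mathbf{s}^+\mid\cdot,\mathbf{a})V(\mathbf{s}^+)$ can be split term by term and the Lemma applied to each pair. I expect this to be the main (though still routine) obstacle because it requires carefully matching probability masses across the two branches generated by the random packet dropout; once that bookkeeping is in place, the theorem follows in one line from Lemma~\ref{lemma:monotone V} and monotonicity of~$g_n$.
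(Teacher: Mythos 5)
Your proposal is correct and follows essentially the same route as the paper's proof: decompose $Q$ via \eqref{eq:Bellman_Q}, bound the immediate reward using the non-decreasing cost $g_n$, factor the transition kernel to isolate device $n$, and match the probability masses $1-p_{n,m}$ and $p_{n,m}$ on the paired next states $\{1,\tau_n+1\}$ versus $\{1,\tau'_n+1\}$ so that Lemma~\ref{lemma:monotone V} applies termwise. Your coupling phrasing is just a more formal wrapper around the paper's term-by-term comparison, and you additionally spell out the $a_n=0$ case that the paper dismisses as notationally analogous.
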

\begin{proof}
For notation simplicity, we assume that $a_{n} = m, \forall m \in \{1,\dots,M\}$.
    From~\eqref{eq:q} and~\eqref{eq:tau}, we have 
    \begin{align}
        \!\!\!\!\!\!\!\!\operatorname{Pr}(\bm{\tau}^{+}|\bm{\tau}\!, \! \mathbf{H},\! \mathbf{a}) 
        & \!=\! \prod_{n=1}^{N} \operatorname{Pr} (\tau_{n}^{+}|\tau_{n}, \mathbf{h}_{n}, a_{n}) \\
        & \!\!=\! \operatorname{Pr} (\tau_{n}^{+}|\tau_{n}, \!\mathbf{h}_{n}, \! a_{n}) \!\operatorname{Pr} (\bm{\tau}_{\backslash n}^{+}|\bm{\tau}_{\backslash n}, \!\mathbf{H}_{\backslash n}, \!\mathbf{a}_{\backslash n}), \label{eq: transition tau}
    \end{align}
    where $\bm{\tau}_{\backslash n} = (\tau_{1}, \dots, \tau_{n-1}, \tau_{n+1}, \dots, \tau_{N})$ and $\mathbf{a}_{\backslash n} = (a_{1}, \dots, a_{n-1}, a_{n+1}, \dots, a_{N})$ represent the AoI states and the actions of all devices except device $n$, respectively;  $\mathbf{h}_{n} = (h_{n,1}, h_{n,2}, \dots, h_{n,M})$ is the vector channel state between device $n$ and the remote estimator and $\mathbf{H}_{\backslash n} = (\mathbf{h}_{1}, \dots, \mathbf{h}_{n-1}, \mathbf{h}_{n+1}, \dots,  \mathbf{h}_{N})$. By using~\eqref{eq: transition p}, ~\eqref{eq:Bellman_Q}, and~\eqref{eq: transition tau}, it can be derived that
    \begin{align}
        \!\!Q(\mathbf{s}, \mathbf{a}) 
        \!=\! r(\mathbf{s}) 
        + & \gamma \! \sum_{\mathbf{H}^{+}} \sum_{\bm{\tau}_{\backslash n}^{+}} \sum_{\tau_{n}^{+}} \operatorname{Pr} (\mathbf{H}^{+}) \operatorname{Pr} (\bm{\tau}_{\backslash n}^{+}|\bm{\tau}_{\backslash n}, \mathbf{H}_{\backslash n}, \mathbf{a}_{\backslash n}) \!\! \\ 
        & \label{eq: Q n -n} 
        \times \operatorname{Pr} (\tau_{n}^{+}|\tau_{n}, \mathbf{h}_{n}, a_{n}) V(\mathbf{s}^{+}). 
    \end{align}
    Since the state $\mathbf{s}'_{\text{AoI}}$ is identical to the state $\mathbf{s}$ except the device $n$'s AoI state $\tau_{n}$, 
    to prove~\eqref{ineq: Q aoi} from~\eqref{eq: Q n -n} and $r(\mathbf{s}'_{\text{AoI}}) \leq r(\mathbf{s})$, we only need to prove the following inequality
    \begin{align}
        & \sum_{{\tau_{n}'}^{+}} \operatorname{Pr} ({\tau_{n}'}^{+}|\tau_{n}, \mathbf{h}_{n}, a_{n}) V({\mathbf{s}'_{\text{AoI}}}^{+}) \\
        \label{ineq:Pr x V}
        & \leq \sum_{{\tau_{n}}^{+}} \operatorname{Pr} ({\tau_{n}}^{+}|\tau_{n}, \mathbf{h}_{n}, a_{n}) V({\mathbf{s}}^{+}).
    \end{align}
    From~\eqref{eq:tau}, each of $\mathbf{s}^{+}$ and ${\mathbf{s}'_{\text{AoI}}}^{+}$  has two different states in~\eqref{ineq:Pr x V} depending on the different transition of device $n$'s AoI state. Then, by taking~\eqref{eq: transition p 2} into \eqref{ineq:Pr x V}, the latter is equivalent to
    \begin{align}
        & \!\!\! \!(1\!- p_{n,m}) V\!(1, \! \bm{\tau}_{\backslash n}^{+}, \! {\mathbf{H}}^{+}) + p_{n,m} V\!({\tau_{n}'} \!+\! 1, \! \bm{\tau}_{\backslash n}^{+}, \! {\mathbf{H}}^{+})\\
        \label{ineq: pV' < pV aoi}
        & \!\!\! \leq (1\!- p_{n,m}) V\!(1, \! \bm{\tau}_{\backslash n}^{+}, \! \mathbf{H}^{+}) \!+\! p_{n,m} V\!({\tau_{n}} \!+\! 1, \! \bm{\tau}_{\backslash n}^{+}, \! \mathbf{H}^{+}).
    \end{align}
    By using Lemma~\ref{lemma:monotone V}, we have $V\!({\tau_{n}'} \!+\! 1, \! \bm{\tau}_{\backslash n}^{+}, \! {\mathbf{H}}^{+}) \leq V\!({\tau_{n}} \!+\! 1, \! \bm{\tau}_{\backslash n}^{+}, \! \mathbf{H}^{+})$. The inequality~\eqref{ineq: pV' < pV aoi} holds directly.
\end{proof}

\begin{theorem}[Monotonicity of Q function w.r.t. channel states]\label{theo: channel}
    Consider action $\mathbf{a} = (a_1, \dots, a_N)$, and states $\mathbf{s} = (\bm{\tau}, \mathbf{H})$ and $\mathbf{s}'_{\text{Ch}} = (\bm{\tau}, \mathbf{H}'_{n,m})$ where $\mathbf{H}'_{n,m}$ is identical to $\mathbf{H}$ except the device-$n$-channel-$m$ state with $h'_{n,m} > h_{n,m}$. 
    \begin{itemize}
    \item[(i)] If $a_{n}=m$, then the following holds\vspace{-0.1cm}
    \begin{align}
        \label{ineq: Q channel}
        Q(\mathbf{s}'_{\text{Ch}}, \mathbf{a}) \leq Q(\mathbf{s}, \mathbf{a}).\vspace{-0.1cm}
    \end{align}
    \item[(ii)] If $a_{n} \neq m$, then the equality in~\eqref{ineq: Q channel} holds, i.e., we have \vspace{-0.1cm}
    \begin{align}
        \label{eq: Q channel}
        Q(\mathbf{s}'_{\text{Ch}}, \mathbf{a}) = Q(\mathbf{s}, \mathbf{a}).\vspace{-0.1cm}
    \end{align}
    \end{itemize}
\end{theorem}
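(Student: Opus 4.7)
The plan is to follow the same template used for Theorem~\ref{theo: AoI}: start from the Bellman expansion $Q(\mathbf{s}, \mathbf{a}) = r(\mathbf{s}) + \gamma \sum_{\mathbf{s}^+} \operatorname{Pr}(\mathbf{s}^+|\mathbf{s}, \mathbf{a}) V(\mathbf{s}^+)$, apply the factorization \eqref{eq: transition p}--\eqref{eq: transition tau} to isolate the contribution of device $n$ and channel $m$, and then invoke Lemma~\ref{lemma:monotone V}. Because the reward $r(\cdot)$ depends only on the AoI vector $\bm\tau$ and $\bm\tau$ is common to $\mathbf{s}$ and $\mathbf{s}'_{\text{Ch}}$, we immediately have $r(\mathbf{s}) = r(\mathbf{s}'_{\text{Ch}})$. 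Furthermore, the i.i.d.\ fading assumption makes $\operatorname{Pr}(\mathbf{H}^+)$ independent of the current $\mathbf{H}$, so the only channel through which $h_{n,m}$ can influence $Q$ is the AoI transition kernel.

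For part (ii), \eqref{eq: transition p 2} shows that the factor $\operatorname{Pr}(\tau_{n'}^+|\tau_{n'}, \mathbf{H}, a_{n'})$ for any device $n'$ depends on $\mathbf{H}$ only through $h_{n', a_{n'}}$ when $a_{n'} \neq 0$, and not at all when $a_{n'} = 0$. Under the hypothesis $a_n \neq m$, none of the entries that enter any of these factors coincides with $h_{n,m}$. Hence every term in \eqref{eq: transition tau} is invariant under replacing $\mathbf{H}$ by $\mathbf{H}'_{n,m}$, the full Bellman sum in \eqref{eq: Q n -n} is unchanged, and \eqref{eq: Q channel} follows immediately.

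For part (i), with $a_n = m$, the only difference between the two Bellman expansions lies in the two-point transition of $\tau_n^+$, which places mass $1 - p_{n,m}$ on the ``packet received'' branch ($\tau_n^+ = 1$) and mass $p_{n,m}$ on the ``packet dropped'' branch ($\tau_n^+ = \tau_n + 1$). Writing $p'_{n,m}$ for the drop probability at the larger channel state $h'_{n,m}$ and using the modeling assumption that the drop probability is non-decreasing in the channel state, we obtain $p'_{n,m} \geq p_{n,m}$. The gap between the two inner conditional expectations then reduces to $(p'_{n,m} - p_{n,m})\bigl(V(\tau_n+1, \bm\tau_{\backslash n}^+, \mathbf{H}^+) - V(1, \bm\tau_{\backslash n}^+, \mathbf{H}^+)\bigr)$, and Lemma~\ref{lemma:monotone V} guarantees that the bracketed quantity is non-positive because the two V-arguments differ only in the single AoI coordinate $\tau_n$. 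Averaging with the non-negative weights $\operatorname{Pr}(\mathbf{H}^+)\operatorname{Pr}(\bm\tau_{\backslash n}^+|\bm\tau_{\backslash n}, \mathbf{H}_{\backslash n}, \mathbf{a}_{\backslash n})$ preserves the sign and yields \eqref{ineq: Q channel}.

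The main subtlety, rather than any deep new argument, is the bookkeeping in the case analysis of \eqref{eq: transition p 2}: confirming that both sub-cases $a_n = 0$ and $a_n = m' \neq m$ feed into part (ii), and that in part (i) the pair of V-function evaluations to which Lemma~\ref{lemma:monotone V} is applied differs in exactly one AoI coordinate so the lemma is directly applicable. No monotonicity result beyond Lemma~\ref{lemma:monotone V} is needed, because $h_{n,m}$ affects the future only through $p_{n,m}$ in the one-step AoI kernel.
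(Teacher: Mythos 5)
Your proposal is correct and follows essentially the same route as the paper's proof: reduce to the inner Bellman sum via the factorization \eqref{eq: transition p}--\eqref{eq: transition tau}, use the i.i.d.\ fading to eliminate any dependence of $\operatorname{Pr}(\mathbf{H}^+)$ on the current channel state, collapse the difference to $(p'_{n,m}-p_{n,m})\bigl(V(\tau_n+1,\bm\tau_{\backslash n}^+,\mathbf{H}^+)-V(1,\bm\tau_{\backslash n}^+,\mathbf{H}^+)\bigr)$, and invoke Lemma~\ref{lemma:monotone V} together with the monotone drop-probability assumption for part (i), while for part (ii) the factor $p_{n,a_n}$ is untouched by the change in $h_{n,m}$ so the difference vanishes. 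Your explicit handling of the $a_n=0$ sub-case in part (ii) is a minor completeness improvement over the paper's write-up, which only treats $a_n=\bar m\neq m$, but it is not a different argument.
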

\begin{proof}
    (i) Based on~\eqref{eq: Q n -n}, we only need to prove that
    \begin{align}
         & \sum_{{\mathbf{H}'}^{+}} \sum_{{\tau_{n}}^{+}} \operatorname{Pr} ({\mathbf{H}'}^{+}) \operatorname{Pr} ({\tau_{n}}^{+}|\tau_{n}, \mathbf{h}'_{n}, a_{n}) V({\mathbf{s}'_{\text{Ch}}}^{+}) \\
        \label{ineq:Pr x V 2}
        & \leq \sum_{{\mathbf{H}}^{+}} \sum_{{\tau_{n}}^{+}} \operatorname{Pr} (\mathbf{H}^{+}) \operatorname{Pr} ({\tau_{n}}^{+}|\tau_{n}, \mathbf{h}_{n}, a_{n}) V({\mathbf{s}}^{+}),
    \end{align}
    which is equivalent to
    \begin{align}
        &  (1-p'_{n,m}) \sum_{{\mathbf{H}'}^{+}}\operatorname{Pr} ({\mathbf{H}'}^{+}) V(1,  \bm{\tau}_{\backslash n}^{+},  {\mathbf{H}'}^{+}) \\
        & + p'_{n,m} \sum_{{\mathbf{H}'}^{+}}\operatorname{Pr} ({\mathbf{H}'}^{+}) V({\tau_{n}} + 1, \bm{\tau}_{\backslash n}^{+},  {\mathbf{H}'}^{+})  \\
        & \leq (1-p_{n,m}) \sum_{{\mathbf{H}}^{+}}\operatorname{Pr} ({\mathbf{H}}^{+}) V(1,  \bm{\tau}_{\backslash n}^{+},  {\mathbf{H}}^{+}) \\
        \label{ineq: pV' < pV channel}
        & + p_{n,m} \sum_{{\mathbf{H}}^{+}}\operatorname{Pr} ({\mathbf{H}}^{+}) V({\tau_{n}} + 1,  \bm{\tau}_{\backslash n}^{+},  {\mathbf{H}}^{+}).
    \end{align}
    From~\eqref{eq:q}, we have 
    \begin{equation}\label{eq: Pr(H')V' = Pr(H)V}
        \sum_{{\mathbf{H}'}^{+}} \operatorname{Pr}({\mathbf{H}'}^{+}) V( \bm{\tau}^{+}, {\mathbf{H}'}^{+}) 
        = \sum_{{\mathbf{H}}^{+}} \operatorname{Pr}({\mathbf{H}}^{+}) V(\bm{\tau}^{+}, {\mathbf{H}}^{+}),
    \end{equation}
    where $\bm{\tau}^{+} = (1,  \bm{\tau}_{\backslash n}^{+})$ or $(\tau_{n} + 1,  \bm{\tau}_{\backslash n}^{+})$.
    By using~\eqref{eq: Pr(H')V' = Pr(H)V}, the inequality~\eqref{ineq: pV' < pV channel} is simplified as
    \begin{align}
        (p'_{n,m} \!-\! p_{n,m}) \!\sum_{{\mathbf{H}}^{+}} \operatorname{Pr} ({\mathbf{H}}^{+}) \Big[ & V({\tau_{n}} \!+\! 1, \! \bm{\tau}_{\backslash n}^{+},  \!{\mathbf{H}}^{+}) \\
        \label{ineq: pV' < pV channel 2}
        & - V(1, \! \bm{\tau}_{\backslash n}^{+}, \! {\mathbf{H}}^{+})\Big] \!\leq\! 0.
    \end{align}
    Due to the facts that $V\!({\tau_{n}} \!+\! 1, \! \bm{\tau}_{\backslash n}^{+}, \! {\mathbf{H}}^{+}) \!\leq\! V\!(1, \! \bm{\tau}_{\backslash n}^{+}, \! {\mathbf{H}}^{+})$ and $p'_{n,m} \geq p_{n,m}$, \eqref{ineq: pV' < pV channel 2} holds directly.

    (ii) Let $a_{n} = \bar{m} \neq m$. Similar to the proof of (i), after simplifications, we only need to prove the following equality
    \begin{align}
        (p_{n,\bar{m}} \!-\! p_{n,\bar{m}}) \!\sum_{{\mathbf{H}}^{+}} \operatorname{Pr} ({\mathbf{H}}^{+}) \Big[ & V({\tau_{n}} \!+\! 1, \! \bm{\tau}_{\backslash n}^{+},  \!{\mathbf{H}}^{+}) \\
        & - V(1, \! \bm{\tau}_{\backslash n}^{+}, \! {\mathbf{H}}^{+})\Big] = 0,
    \end{align}
    which is directly valid.
\end{proof}

\vspace{-0.3cm}
\section{Partially Monotonic Critic Neural Network}
Building on the partial monotonicity of the Q function, we aim to develop novel training methods guaranteeing the monotonicity of the critic NN that can help solve Problem~\ref{pro1} more effectively compared to conventional learning methods.
We consider two partially monotonic critic NN training methods: (i) network architecture-enabled monotonicity and (ii) regularization-based monotonicity in the sequel.

\vspace{-0.3cm}
\subsection{Network Architecture-enabled Monotonicity} \label{sec: MA-DDPG}
We propose a three-layer critic NN architecture (a shallow NN) that directly guarantees the partial monotonicity of the Q function, as shown in Fig. 2 (a).
The critic NN integrates the outputs of two NN processing the state and the action inputs separately.
In particular, the state-processing part (on the top of Fig. 2(a)) has to be a monotonic decreasing function.
We adopt the monotonic NN in~\cite{daniels2010monotone}, which guarantees that the output is a monotonic decreasing function of the input with the following conditions: 1) the activation functions are non-decreasing (e.g., sigmoid or ReLU), and 2) the weights satisfy
\begin{equation}\label{eq: network constraint}
    \beta^{(1, 2)}_{\bar{i}, \bar{j}} \beta^{(2, 3)}_{\bar{j}} \leq 0, \forall \bar{i}\in\{1,N(M+1)\}, \bar{j} \in \{1,\dots, U\}.
\end{equation}
In~\eqref{eq: network constraint}, $\beta^{(1, 2)}_{\bar{i}, \bar{j}}$ denotes the weight for the connection between $\bar{i}$-th input node and the $\bar{j}$-th hidden node, $\beta^{(2, 3)}_{\bar{j}}$ denotes the weight for the connection between the $\bar{j}$-th hidden node and the output node in the critic NN, and $U$ is the size of the hidden layer.
There is no specific requirement on the parameters of the action-processing part (at the bottom of Fig. 2(a)).

First, we adopt the sigmoid activation function with positive output. Then, to satisfy the constraint~\eqref{eq: network constraint} and to ensure negative Q value outputs (see the definition in \eqref{eq: def Q}) during training, we restrict the weights $\beta^{(1, 2)}_{\bar{i}, \bar{j}}$ and $\beta^{(2, 3)}_{\bar{j}}$ to be non-negative and non-positive, respectively, i.e., $\beta^{(1, 2)}_{\bar{i}, \bar{j}} \leftarrow \max (0, \beta^{(1, 2)}_{\bar{i}, \bar{j}} )$ and $\beta^{(2, 3)}_{\bar{i}, \bar{j}} \leftarrow \min (0, \beta^{(2,3)}_{\bar{i}, \bar{j}} )$.

The advantage of the presented approach is that the monotonicity is strictly and automatically ensured by the NN architecture during the training at each time step. A potential drawback is that the ``shallow" monotonic critic NN has a single hidden layer, limiting its capability for approximating complex functions, which are required when solving scheduling problems with large state and action spaces.
Note that although there are some deep monotonic NN, such as deep lattice networks \cite{you2017deeplattiice}, these NNs have very complex architectures and are computationally expensive for large inputs. 
Therefore, we propose the following approach for training a ``deep" monotonic critic NN, which is a multi-layer fully connected NN without a specific NN architecture requirement.

\begin{figure}[t]
    \centering
    \vspace{-0.1cm}
    \begin{subfigure}[]
        {
        \begin{minipage}[t]{0.47\linewidth}
        \centering
        \includegraphics[width=\textwidth]{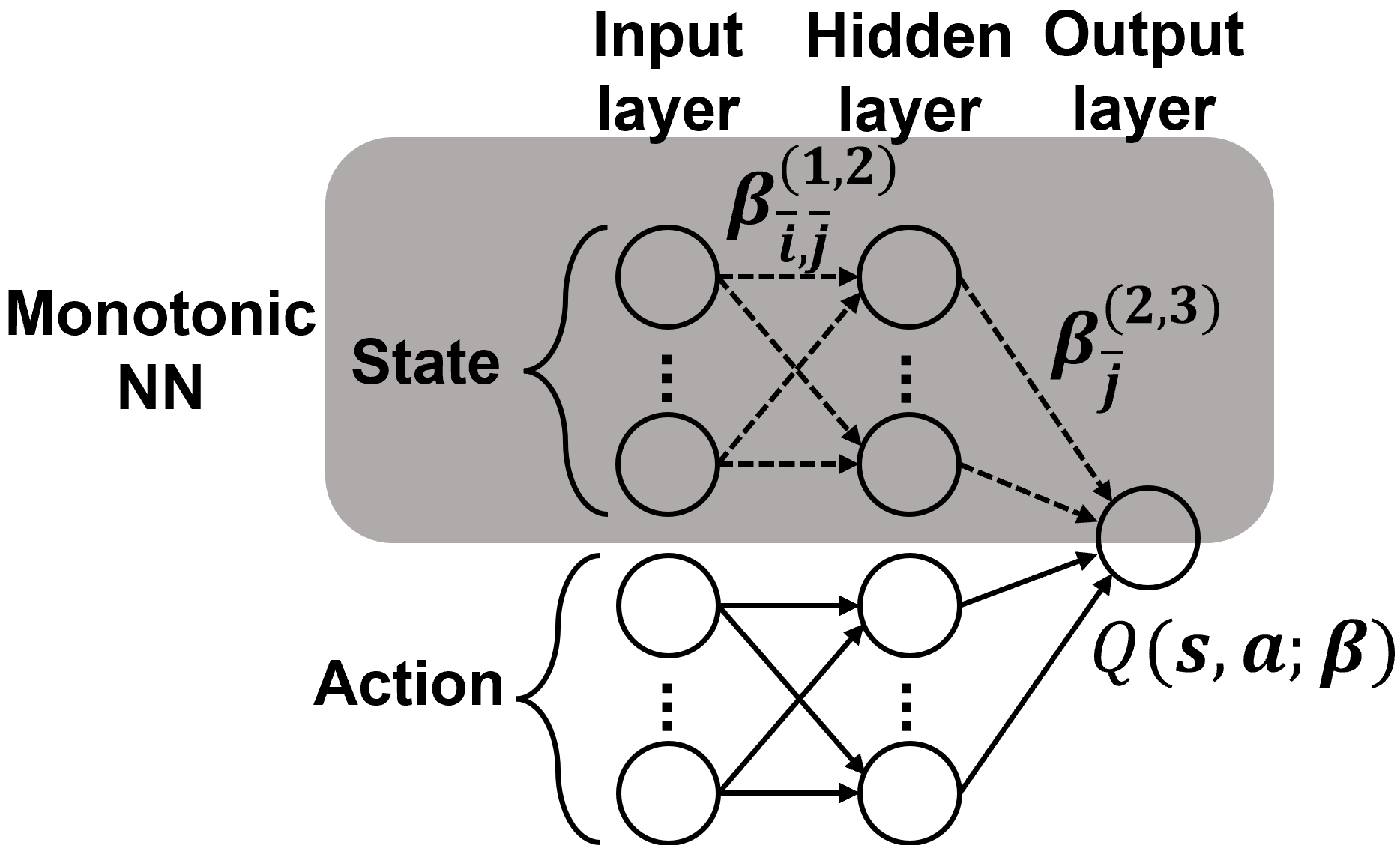}
        \end{minipage}
        \label{fig: monotone critic network}
        }
    \end{subfigure}
    \hspace{-0.25cm}
    \begin{subfigure}[]
        {
        \begin{minipage}[t]{0.455\linewidth}
        \centering
        \includegraphics[width=1\textwidth]{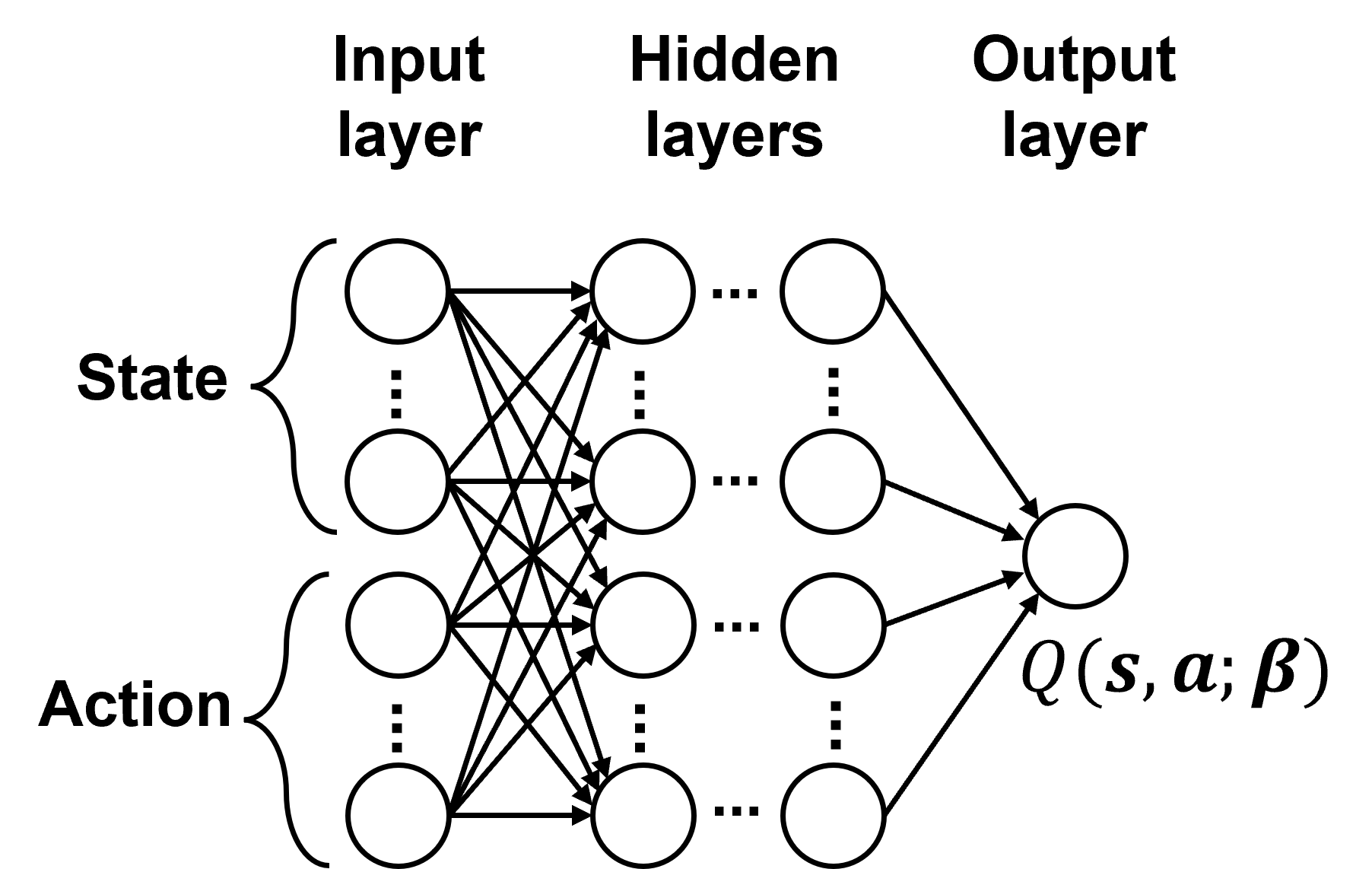}
        \end{minipage}
        \label{fig: general critic network}
        }
    \end{subfigure}
    \vspace{-0.4cm}
    \caption{Partially monotonic critic NNs. {(a) A shallow critic NN with network architecture-enabled monotonicity, where the NN has only one hidden layer. (b) A deep critic NN with regularization-based monotonicity.}
    }
    \vspace{-0.5cm}
    \label{fig: critic network}
\end{figure}

\subsection{Regularization-based Monotonicity}
We introduce penalty terms in the loss function of the critic NN training to enforce a monotonically \emph{decreasing} relationship between the state input and the output.
In particular, we propose two types of penalty for monotonicity regularization.

\subsubsection{Positive derivative penalty (type I)}
Let $s_{i,j}$ denote the $j$-th state of the length-$N(M+1)$ state vector $\mathbf{s}_i$, which can be an AoI state (i.e., $j\leq N$) or a channel state (i.e., $j> N$).
The partial derivative of the Q function $Q(\mathbf{s}_i, \mathbf{a}_i; \bm{\beta})$ at the state input $s_{i,j}$ is given as
\begin{equation}\label{eq: derivation}
    \mathsf{QD}_{i,j} = \nabla_{s_{i,j}} Q(\mathbf{s}_i, \mathbf{a}_i; \bm{\beta}) \quad  \forall j \in \{ 1, \dots, N (M+1)\}.
\end{equation}
To enforce the critic NN approximating a monotonically decreasing function at $s_{i,j}$,  the corresponding positive derivative penalty is $\max(0,\mathsf{QD}_{i,j})$.

The computation of the derivative \eqref{eq: derivation} requires backward propagation, which has a higher computational complexity than forward propagation, i.e., evaluating a Q value by feeding the critic NN with an input. Thus, we propose a forward-propagation-only penalty scheme below.
\subsubsection{Positive increment penalty (type II)}
We introduce the Q value increment from states $\mathbf{s}_{i}$ to $\mathbf{s}'_{i, j}$ as
\begin{equation}\label{eq: MD error}
    \mathsf{QI}_{i,j} = Q(\mathbf{s}'_{i,j}, \mathbf{a}_{i}; \bm{\beta}) - Q(\mathbf{s}_{i}, \mathbf{a}_{i}; \bm{\beta}),
\end{equation}
where $\mathbf{s}'_{i, j}$ is identical to $\mathbf{s}_{i}$ except for the $j$-th state with one-step increment, $s'_{i, j} = s_{i, j} + 1$.
To enforce the critic NN approximating a monotonically decreasing function from $\mathbf{s}_{i}$ to $\mathbf{s}'_{i, j}$, we define the positive increment penalty as $\max(0,\mathsf{QI}_{i,j})$.

Potentially, given the vector state $\mathbf{s}_i$ and based on the penalty defined above related to $s_{i,j}$, one can calculate penalty terms for all $j \in \{1,\dots, N(M+1)\}$, leading to high computational cost when $N$ and $M$ are large. Therefore, we propose a novel scheme to compute the penalty effectively.

\emph{\underline{Stochastic penalty sampling scheme.}}
First, given the state action pair $(\mathbf{s}_i,\mathbf{a}_i)$, we define an effective set $\mathcal{J}_i \subset \{1,\dots,N(M+1)\}$ of the individual states for penalty calculation.
From \eqref{eq: derivation} and \eqref{eq: MD error}, a penalty term is effective if $\mathsf{QD}_{i,j}$ or $\mathsf{QI}_{i,j}$ is positive and the Q value changes dramatically. From \eqref{eq: Q channel} of Theorem~\ref{theo: channel}, channel $h_{n,m}$'s quality improvement does not change the Q value at all, if device $n$ is not scheduled at channel $m$.
Thus, we can eliminate those cases when calculating the penalty.
We define an indicator on whether a channel state $s_{i,j}$ with $ j > N$ should be considered for penalty evaluation or not, i.e.,
\begin{equation}
    \!\!\!\!I(s_{i,j}) = \begin{cases}
        j & \text{\!$\mathbf{a}_i$ utilizes $s_{i,j}$'s corresponding channel,} \\
        \O & \text{\!otherwise.}
    \end{cases}\!\!
\end{equation}
Then, the effective set for penalty calculation is
\begin{equation}\label{eq: J set}
    \mathcal{J}_{i} \!=\! \{ 1, \dots, N, I(s_{i,N+1}), \dots, I(s_{i,N(M+1)}) \}.
\end{equation}
Second, we randomly generate a uniformly down-sampled set $\bar{\mathcal{J}}_i$ with size $K \ll N(M+1)$ based on $\mathcal{J}_i$.
From~\eqref{eq: derivation} and \eqref{eq: MD error}, the loss function of the transition $\mathcal{T}_i$ regularized by the penalty terms determined by $\bar{\mathcal{J}}_i$ is
\begin{equation}
    L_{i} (\bm{\beta}) \!=\!\!
    \begin{cases}
        \!\mathsf{TD}_{i}^{2} \!+\! \sum_{j \in \bar{\mathcal{J}}_i} \max \left( 0, \mathsf{QD}_{i,j} \right) & {\text{type I regularized}}, \\
        \!\mathsf{TD}_{i}^{2} \!+\! \sum_{j \in \bar{\mathcal{J}}_i} \max \left(0, \mathsf{QI}_{i,j}\right) & {\text{type II regularized}}.
    \end{cases}
\end{equation}
A well-trained critic NN $\bm \beta$ should minimize both the TD error and the penalty term in the loss function above.

\begin{table}[t]
    \footnotesize
    \setlength\tabcolsep{6pt}
    \centering
    \vspace{-0.5cm}
    \caption{Summary of Hyperparameters of different DDPG algorithms}
    \vspace{-0.3cm}
    \label{tab:Setup}
    \begin{tabular}{c|c}
         \hline \hline
         \textbf{Hyperparameters} & Value \\
         \hline
         \rowcolor[HTML]{EFEFEF} 
         Mini-batch size, $B$                       & 128 \\
         Time horizon for each episode              & 500 \\
         \rowcolor[HTML]{EFEFEF} 
         Experience replay memory size, $K$         & 20000 \\
         Learning rate of actor network and critic network       & 0.0001 and 0.001 \\
         \rowcolor[HTML]{EFEFEF}
         Decay rate of learning rate                & 0.001 \\
         Soft parameter for target update, $\delta$ & 0.005 \\
         \rowcolor[HTML]{EFEFEF} 
         Input dimension of actor network   & $N+N \times M$ \\
         Output dimension of actor network   & $N$ \\
         \rowcolor[HTML]{EFEFEF} 
         Input dimension of critic network   & $2N+N \times M$ \\
         Output dimension of critic network   & $1$ \\
         \hline \hline
    \end{tabular}
\vspace{-0.6cm}
\end{table}

\vspace{-0.3cm}
\section{Numerical Experiments} \label{sec: simulation}
In this section, we evaluate and compare the performance of the three monotonicity-enforced training methods for DDPG proposed in the previous section, namely monotonic architecture-based DDPG (MA-DDPG), type I monotonicity-regularized DDPG (MRI-DDPG), and type II monotonicity-regularized DDPG (MRII-DDPG).

The computing platform for running the numerical experiments has an Intel Core i5 9400F CPU @ 2.9 GHz, 16GB RAM, and an NVIDIA RTX 2070 GPU.
The experiments are built on the remote estimation system described in Example~\ref{eg:1}. 
In terms of the system parameters, each process has a two-dimensional state and scalar measurement, i.e., $l_{n} = 2$ and $c_{n} = 1$. The spectral radius of $\mathbf{A}_{n}$ and the entries of $\mathbf{C}_{n}$ are randomly sampled from the range (1, 1.3) and (0,1), respectively. The channel states are quantized into $\bar{h}=5$ levels from the Rayleigh fading channel with the scale parameter drawn uniformly from the range (0.5, 2)~\cite{chen2022seDRL}. The packet drop probabilities of different levels are set as 0.2, 0.15, 0.1, 0.05, and 0.01.
The critic NN for a 6-sensor-3-channel system and other cases has 1 hidden layer and 3 hidden layers, respectively, each with 1024 nodes. The actor NN for all cases has 3 hidden layers with 1024 nodes. All the NNs are fully connected NNs.
The number of penalty terms evaluated is $K=2$. The setting of the other hyperparameters for the different DDPG algorithms are shown in Table~\ref{tab:Setup}.

\begin{table*}[t]
	\footnotesize
	\setlength\tabcolsep{7.5pt}
	\centering
	\caption{Performance Comparison of the Different DDPG Algorithms}
	\label{tab:test result}
	\setlength{\tabcolsep}{1.5mm}
	\vspace{-0.3cm}
	\begin{tabular}{c|c|c|c|c|c|c|c|c|c|c|c|c}
		\hline \hline
		\multirow{2}*{\thead{System Scale \\ $(N,M)$}} & \multirow{2}*{\thead{Algorithms}} & \multirow{2}*{\thead{Training time/episode \\ (second)}} & \multicolumn{2}{c|}{Experiment 1} & \multicolumn{2}{c|}{Experiment 2} & \multicolumn{2}{c|}{Experiment 3} & \multicolumn{2}{c|}{Experiment 4} & \multicolumn{2}{c}{Experiment 5}\\
          \cline{4-13}
                             &  & &  \thead{NEC} & MSE & \thead{NEC} & MSE &\thead{NEC} & MSE & \thead{NEC} & MSE & \thead{NEC} & MSE\\
		\hline
		\rowcolor[HTML]{EFEFEF}
		$(6, 3)$  & DDPG     & 3.10                    &   $-$    & $-$&            $-$     & $-$              & 107 & 133.7559 & $-$ & $-$ & $-$ & $-$\\
		       &\textbf{MA-DDPG}  & 3.17                    &  \textbf{58}      &\textbf{50.4761} &   \textbf{80} & \textbf{80.3567} & \textbf{30} & \textbf{56.3479}   & \textbf{56} & \textbf{66.5171} & \textbf{83} & \textbf{74.1661}\\ 
          	 \rowcolor[HTML]{EFEFEF}
        		 & MRI-DDPG & 4.69   &    74    & 57.3178 &          $-$   & $-$              & 82 & 61.4528  & $-$ & $-$ & $-$ & $-$\\
        		 & MRII-DDPG & 4.01  &     66   & 57.4541 &          $-$   & $-$              & 53 & 61.4805  & $-$ & $-$ & $-$ & $-$\\
		\hline
		
		\rowcolor[HTML]{EFEFEF}
		$(14, 7)$ & DDPG     & 6.16  &   189     &  160.6451      &  162   & 136.5146    & $-$  & $-$  & $-$ & $-$ & 132 & 127.8284\\
          	 
        		 & MRI-DDPG & 8.15  &   148     &  122.1874      &  105   &  114.3175   &  204 & 140.6513 & 176 & 207.1918 & 97 & 110.1273 \\
                \rowcolor[HTML]{EFEFEF}
        		 & \textbf{MRII-DDPG} & 7.20  & \textbf{87} &  \textbf{112.3492}  &  \textbf{60} &  \textbf{108.8440} &  \textbf{160} & \textbf{131.0342}  & \textbf{113}& \textbf{187.2761} & \textbf{68} & \textbf{103.4623}\\
		\hline
		
		$(20, 10)$  & DDPG     & 8.32  &  193    & 210.6116    & 188  & 199.5522  & 213 & 175.2343 & $-$ & $-$ & $-$ & $-$\\
          	  \rowcolor[HTML]{EFEFEF}
        		  & MRI-DDPG & 9.83  &   158   &  172.7708   &  163 &  181.6881 & 182 & 144.6849  & 127 & 180.3464 & 273 & 216.3028\\
        		  & \textbf{MRII-DDPG} & 9.37  &   \textbf{116}   &  \textbf{155.7731}   & \textbf{109}    &   \textbf{170.6473}  & \textbf{131} &\textbf{134.4982} & \textbf{103} &\textbf{167.6880} & \textbf{201} & \textbf{181.9388} \\
		\hline \hline
	\end{tabular}
 \vspace{-0.4cm}
\end{table*}

Figs.~\ref{fig:training curve 6-3} and~\ref{fig:training curve 14-7} show the average sum MSE of all processes during the training achieved by different DDPG algorithms with $N=6$, $M=3$ and $N=14$, $M=7$, respectively.
Fig.~\ref{fig:training curve 6-3} illustrates that the MA-DDPG reduces the average sum MSE by $15\%$ when compared to the MRI-DDPG and MRII-DDPG, while the baseline DDPG cannot converge with the shallow critic NN. Thus, for a small-scale system, MA-DDPG is the most effective algorithm.
Fig.~\ref{fig:training curve 14-7} shows that MRI-DDPG and MRII-DDPG save about $15\%$ and $50\%$ training episodes for convergence, respectively, and also reduce the average sum MSE by $30\%$ when compared to DDPG. MRII-DDPG performs better than MRI-DDPG, since the state space is discrete, and MRII-DDPG directly regularizes the Q function monotonicity at the discrete state space.
Note that MA-DDPG cannot converge due to its shallow NN architecture.

\begin{figure}[t]
    \centering
    \includegraphics[width=0.8\linewidth]{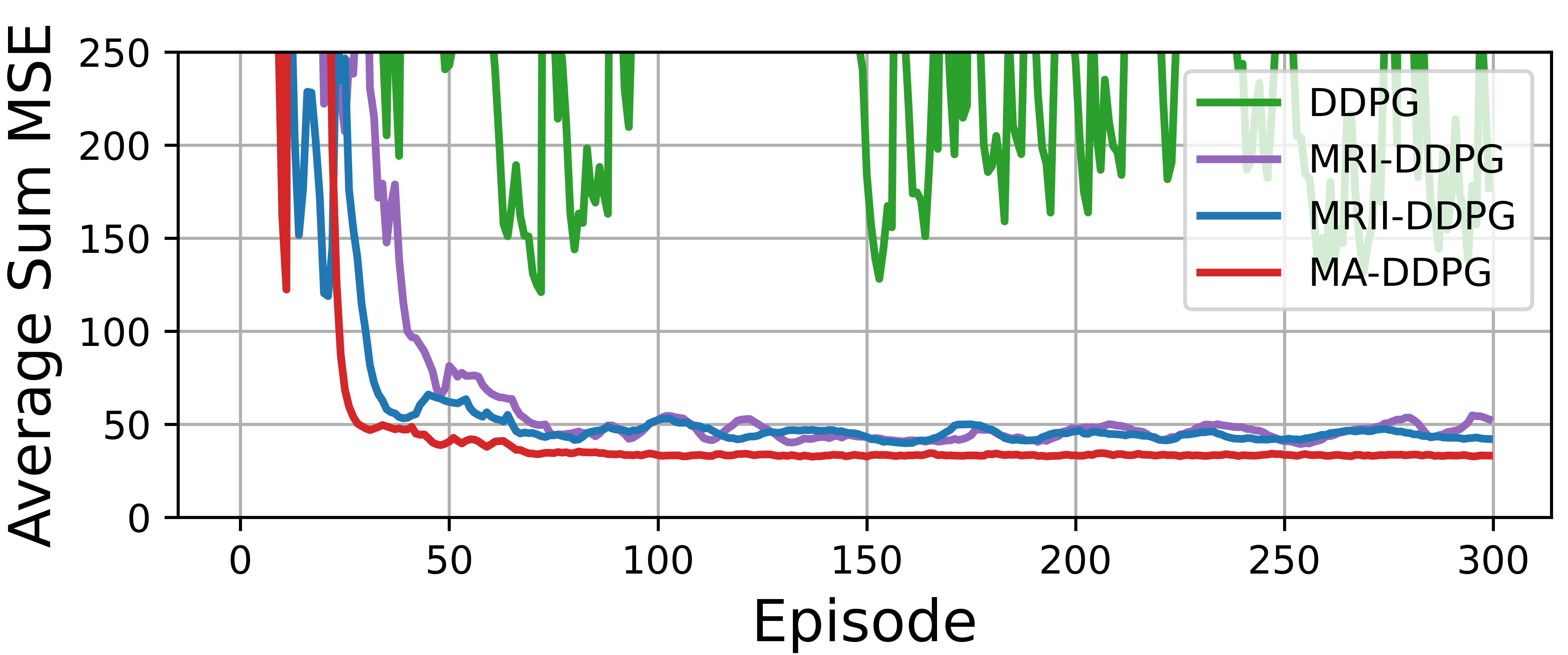}
    \vspace{-0.35cm}
    \caption{Average sum MSE during training with $N = 6, M = 3$. The critic NN has only one hidden layer -- a shallow NN setup.}
    \vspace{-0.5cm}
    \label{fig:training curve 6-3}
\end{figure}

\begin{figure}[t]
    \centering
    \includegraphics[width=0.8\linewidth]{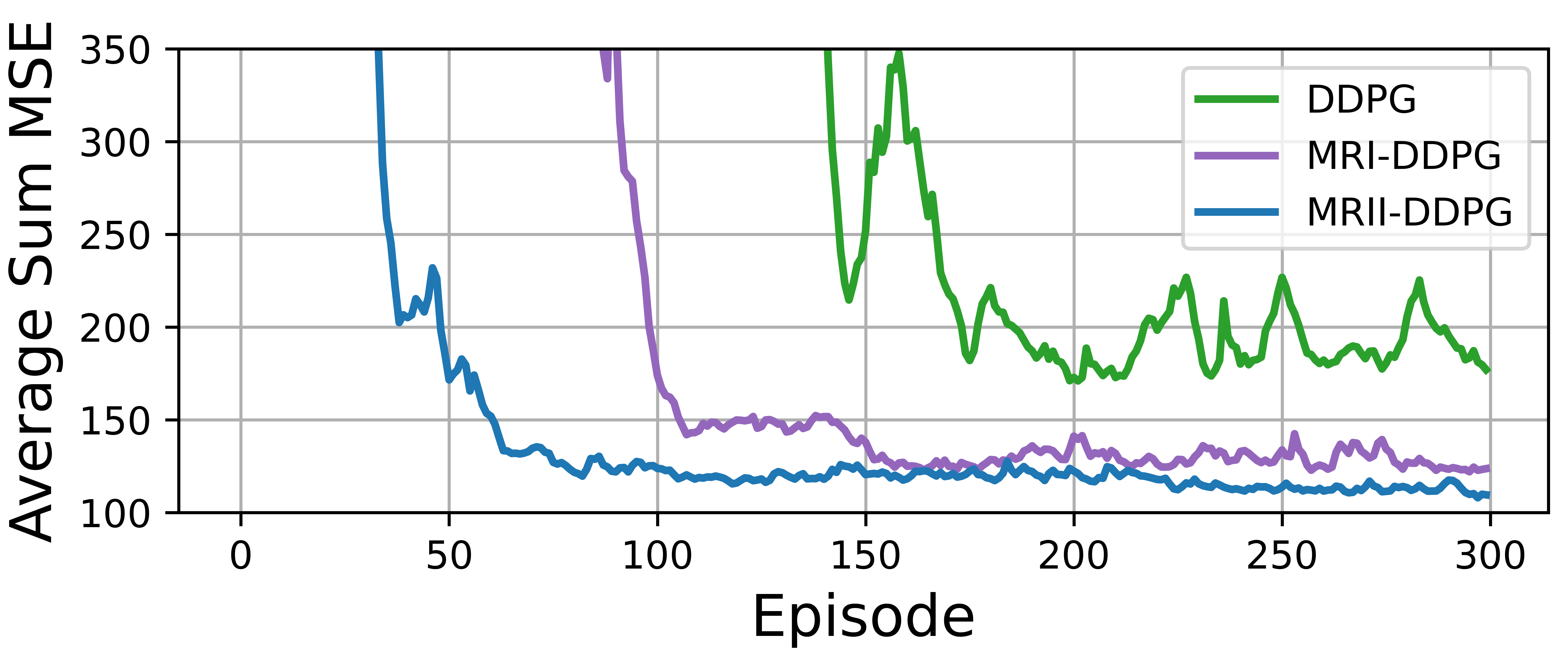}
    \vspace{-0.35cm}
    \caption{Average sum MSE during training with $N = 14,  M=7$. The critic NN has three hidden layers -- a deep NN setup.}
    \vspace{-0.6cm}
    \label{fig:training curve 14-7}
\end{figure}

In Table~\ref{tab:test result}, we evaluate the training time, the number of episodes for convergence (NEC), and the performance of different DDPG algorithms (based on 20000-step simulations) over different system scales and parameters.
We see that for the 20-sensor-10-channel systems, MRII-DDPG can reduce the average MSE by $27\%$ when compared to DDPG (at the cost of the increased training time per episode by $13\%$), whilst saving more than $30\%$ time for convergence (i.e., training time/episode $\times$ NEC), despite the extended training duration of each episode. In particular, MRII-DDPG solves the scheduling problem in Experiments 4 and 5 effectively, while the baseline DDPG fails to converge.

\section{Conclusion}
In this work, building on the monotonicity of the Q function of the optimal semantic-aware  scheduling policy, we have developed monotonicity-driven DRL algorithms to solve the scheduling problems effectively and reduce the MSE by about $30\%$ compared to the conventional DRL algorithm.
For future work, we will investigate semantic-aware resource allocation problems over practical time-correlated fading channels.

\vspace{-0.2cm}

\bibliographystyle{IEEEtran}

\end{document}